
\documentclass[letterpaper, 10 pt, conference]{ieeeconf}  

\IEEEoverridecommandlockouts                              

\overrideIEEEmargins                                      



\usepackage{graphics} 
\usepackage{epsfig} 
\usepackage{mathptmx} 
\usepackage{times} 
\usepackage{amsmath} 
\usepackage{amssymb}  
\usepackage{leftidx}
\usepackage{mathtools}
\usepackage{tensor}
\usepackage{theorem}
\usepackage[english, ruled, vlined]{algorithm2e}
\usepackage{mathtools}
\usepackage{thmtools}
\usepackage{thm-restate}
\usepackage{cleveref}
\usepackage{subfigure}
\declaretheorem[name=Theorem]{theorem}
\declaretheorem[name=Lemma]{lemma}
\declaretheorem[name=Definition]{definition}

\usepackage{authblk}
\usepackage{cite}

\newcommand\varpm{\mathbin{\vcenter{\hbox{%
  \oalign{\hfil$\scriptstyle+$\hfil\cr
          \noalign{\kern-.3ex}
          $\scriptscriptstyle({-})$\cr}%
}}}}

%



\title{\LARGE\bf Asymptotic Optimality of a Time Optimal\\ Path Parametrization Algorithm}
\author{Igor Spasojevic \and Varun Murali \and Sertac Karaman\thanks{Laboratory of Information and Decision Systems, Massachusetts Institute of Technology, 32 Vassar Street. This work was partly supported by the Office of Naval Research (ONR) and the Army Research Lab DCIST project. {\tt \{igorspas, mvarun, sertac\}@mit.edu}}}

\begin{document}

\maketitle
\thispagestyle{empty}
\pagestyle{empty}

\begin{abstract}

Time Optimal Path Parametrization is the problem of minimizing the time interval during which an actuation constrained agent can traverse a given path. Recently, an efficient linear-time algorithm for solving this problem was proposed \cite{TOPPRA}. However, its optimality was proved for only a strict subclass of problems solved optimally by more computationally intensive approaches based on convex programming. In this paper, we prove that the same linear-time algorithm is asymptotically optimal for all problems solved optimally by convex optimization approaches. We also characterize the optimum of the Time Optimal Path Parametrization Problem, which may be of independent interest. 

\end{abstract}

\section{Introduction}

Time optimal path parametrization is the problem of finding the shortest time required by a differentially constrained agent to execute a specified geometric path. For example, consider an autonomous race car that has to complete a given race track in minimum time. The path of the race car on the plane is fixed. Its speed along this path needs to be decided, given actuation constraints that, e.g., limit its acceleration. 


Seminal work on time optimal path parametrization dealt with planning trajectories for robotic manipulators \cite{BobrowDubowsky}. The first algorithm \cite{BobrowDubowsky}, enhanced thereafter in numerous works \cite{NIExtensions}, was theoretically grounded on Pontryagin's Maximum Principle \cite{OptimalControlBook}. This class of algorithms determined the optimal speed profile, a function mapping the position of the agent along the path to its speed, by stitching together integral curves arising from a bang-bang control policy. Although theoretically sound and computationally efficient, these algorithms were beset by issues of numerical instability \cite{TOPPRA}. 

More recent line of work \cite{LippBoyd}  \cite{VerscheureDemeulenaere} built on the insight that a whole spectrum of time optimal path parametrization problems could be cast as a convex optimization problem after a suitable change of variables. These algorithms solve for the optimal \textit{squared} speed profile. Under such reparametrization, typical constraints on velocity and acceleration of the agent, as well as the path traversal time, become convex functions of the decision variables.  
Specifically, convex optimization approaches first partition the path by a sequence of discretization points. They then jointly recover approximations of the optimal squared speed profile at every point \cite{LippBoyd} \cite{VerscheureDemeulenaere}. Although these methods are both numerically stable and converge to optimal solutions, their time complexity is considered to be high for many practical real-time robotics applications \cite{TOPPRA}. 

Similar to the convex-optimization-based approaches, algorithms developed in \cite{TOPPRA, ConsoliniDiscreteTime, ConvexWaiter, HungarianManipulators} approximate the optimal squared speed at a set of discretization points. In addition to being numerically stable, they are computationally efficient \cite{TOPPRA}, which they achieve by exploiting additional structure possessed by time optimal path parametrization problems. However, their optimality has been established for only a \textit{strict} subset of problems solved optimally by convex optimization approaches \cite{TOPPRA}.   


This paper shows the algorithm outlined in \cite{TOPPRA} is in fact \textit{optimal} in the limit as the distance between consecutive discretization points tends to zero. The main contribution of this paper is twofold. First, we develop a characterization of the optimal solution using results from non-smooth analysis \cite{kannan2012advanced} and non-linear control \cite{KhalilBook} that have not been previously used in this context to the best of our knowledge (Theorem \ref{ContinuousSupremum}). Second, we uncover a natural relationship between continuous solutions and those defined on a set of discretization points as output by all numerical algorithms (Theorem \ref{ConsistencyProof}).  

This paper is organized as follows. In section \ref{NonSmoothAnalysis}, we present the necessary background on non-smooth analysis used for the problem definition presented in Section \ref{ProbDef}. In Section \ref{OptimumChar}, we present our first main result that characterizes the optimal solution. We recall the algorithm given in \cite{TOPPRA} in Section \ref{NumAg}, and we present our second main result that proves the asymptotic optimality of this algorithm in Section \ref{AsympOpt}. 
\section{Preliminaries on Non-Smooth Analysis}
\label{NonSmoothAnalysis}
\begin{definition}\cite{KhalilBook,kannan2012advanced}
For a continuous function $h : [a, b] \rightarrow \mathbb{R}$, we define functions $D^{+}h, \ D^{-}h  :  [a, b) \rightarrow \mathbb{R} \cup \{\pm \infty\}$ given by
\[
D^{+}h (s) = \limsup_{s' \downarrow s} \frac{h(s') - h(s)}{s' - s}, \ D^{-}h (s) = \liminf_{s' \downarrow s} \frac{h(s') - h(s)}{s' - s},
\]
for all $s \in [a,b)$. Additionally, $h$ is called Dini differentiable if both $D^{+}h$ and $D^{-}h$ take on values strictly in $\mathbb{R}$. 
\end{definition}

By definition, $D^{+}h(s) \geq D^{-}h(s)$ for all $s \in [a,b)$. Furthermore, $h$ is right differentiable at $s$ if and only if $D^{+}h(s) = D^{-}h(s) \in \mathbb{R}$, in which case its right derivative equals $D^{+}h(s)$. For every pair of Dini differentiable functions $h_1$ and $h_2$, non-negative $\theta \in \mathbb{R}$, and right differentiable function $f$:

\begin{enumerate}
\item $D^{+}(h_1 + h_2) \leq D^{+}h_1 + D^{+}h_2$
\item $D^{-}(h_1 + h_2) \geq D^{-}h_1 + D^{-}h_2$
\item $D^{\pm}(\theta h_1) = \theta \ D^{\pm} h_1$
\item \label{DiniSignFlip} $D^{+}(-h_1) = - D^{-}h_1$
\item \label{DiniDifferentiableCombination} $D^{\pm}(h + f) = D^{\pm}h + f'$.
\end{enumerate}
\medskip
The following theorem is one of the fundamental results of non-smooth analysis. 

\begin{theorem} \cite{kannan2012advanced} \label{RoughMeanValueTheorem}
Let $h : [a, b] \rightarrow \mathbb{R}$ be a continuous function. The following are equivalent: 
\begin{enumerate}
\item $h$ is monotonically decreasing on $[a, b]$
\item $D^{-}h(s) \in [-\infty, 0]$ for all $s \in [a,b)$
\item $D^{+}h(s) \in [-\infty, 0]$ for all $s \in [a,b)$.
\end{enumerate}

\end{theorem}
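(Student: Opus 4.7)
The plan is to establish the equivalences via the cycle $(1) \Rightarrow (3) \Rightarrow (2) \Rightarrow (1)$. The first two implications follow immediately from the definitions, and only the last one carries real content. Indeed, if $h$ is monotonically decreasing on $[a,b]$, then $h(s') - h(s) \le 0$ whenever $s < s'$, so each forward difference quotient is non-positive and its $\limsup$, namely $D^{+}h(s)$, is at most $0$; this settles $(1) \Rightarrow (3)$. The implication $(3) \Rightarrow (2)$ is then immediate from the pointwise inequality $D^{-}h(s) \le D^{+}h(s)$.

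For the substantive direction $(2) \Rightarrow (1)$, I would argue by contradiction. Suppose $h$ is not monotonically decreasing, so there exist $s_1 < s_2$ in $[a,b]$ with $h(s_2) > h(s_1)$, and set $\alpha = (h(s_2) - h(s_1))/(s_2 - s_1) > 0$. Introduce the perturbed function
\[
g(s) = h(s) - h(s_1) - (\alpha/2)(s - s_1),
\]
which is continuous, with $g(s_1) = 0$ and $g(s_2) = (\alpha/2)(s_2 - s_1) > 0$. By property \ref{DiniDifferentiableCombination} applied with the affine (hence right differentiable) term, $D^{-}g(s) = D^{-}h(s) - \alpha/2 \le -\alpha/2 < 0$ for every $s \in [a,b)$.

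Next, define $s^{*} = \sup\{s \in [s_1, s_2] : g(s) \le 0\}$. By continuity of $g$ one has $g(s^{*}) \le 0$, and since $g(s_2) > 0$ we conclude $s^{*} < s_2 \le b$, so $D^{-}g(s^{*})$ is well-defined. By the choice of $s^{*}$, every $s \in (s^{*}, s_2]$ satisfies $g(s) > 0 \ge g(s^{*})$, so the difference quotient $(g(s) - g(s^{*}))/(s - s^{*})$ is strictly positive; taking the $\liminf$ as $s \downarrow s^{*}$ yields $D^{-}g(s^{*}) \ge 0$, which contradicts $D^{-}g(s^{*}) \le -\alpha/2 < 0$.

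The main obstacle is the standard subtlety that ``$D^{-}h \le 0$'' is only a statement about a $\liminf$ of difference quotients, so a purely local comparison of $h$ with an affine function cannot by itself force a contradiction; one must shift by a small positive slope $\alpha/2$ and then globally locate the point $s^{*}$ at which the perturbed function $g$ is trapped at a non-positive value with strictly positive forward increments. All other verifications (continuity of $g$, applicability of property \ref{DiniDifferentiableCombination}, and the bound $s^{*} < b$) are routine.
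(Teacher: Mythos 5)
The paper cites this theorem from \cite{kannan2012advanced} and does not supply a proof, so there is no in-paper argument to compare against; I therefore assess your proof on its own terms. It is correct and self-contained. The chain $(1) \Rightarrow (3) \Rightarrow (2)$ is, as you say, immediate from the definition of $D^{+}$ and from the pointwise inequality $D^{-}h \le D^{+}h$ recorded in Section~\ref{NonSmoothAnalysis}. For $(2) \Rightarrow (1)$ you use the standard perturbation argument: subtract a small positive linear slope $\alpha/2$ to obtain $g$ with $D^{-}g \le -\alpha/2 < 0$ everywhere, then set $s^{*} = \sup\{s \in [s_1,s_2] : g(s) \le 0\}$, note $g(s^{*}) \le 0 < g(s)$ for $s \in (s^{*}, s_2]$ by continuity and maximality, and read off $D^{-}g(s^{*}) \ge 0$, a contradiction. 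Your identification of the key subtlety (that a purely local comparison with an affine majorant does not work for $\liminf$-type derivatives, so a global extremal point for the shifted function must be produced) is exactly the right way to motivate the construction.

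One minor technical remark: Property~\ref{DiniDifferentiableCombination} in the preliminaries is stated for Dini differentiable $h$, i.e., with $D^{\pm}h$ finite, whereas hypothesis $(2)$ permits $D^{-}h(s) = -\infty$. The identity $D^{-}(h+f)(s) = D^{-}h(s) + f'(s)$ nevertheless holds in extended-real arithmetic, because the affine term contributes an honest limit of difference quotients, and $\liminf(a_n + c_n) = \liminf a_n + \lim c_n$ whenever $(c_n)$ converges, even when $\liminf a_n = -\infty$. So your estimate $D^{-}g(s) \le -\alpha/2$ is valid in all cases; it would just be slightly cleaner to derive it directly from the $\liminf$ definition rather than by quoting Property~\ref{DiniDifferentiableCombination} verbatim.
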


Theorem \ref{RoughMeanValueTheorem} has two important corollaries. Firstly, Property (\ref{DiniSignFlip}) of Dini derivatives implies a continuous function $h$ is monotonically increasing if and only if $D^{+}h$ and $D^{-}h$ are non-negative functions. Second, if $D^{+}h$ is bounded above (below) by $\lambda \in \mathbb{R}$, property \ref{DiniDifferentiableCombination} implies $h(s') \leq (\geq) \ h(s) + \lambda (s' - s)$ for all $a \leq s \leq s' \leq b$.

\section{Problem Definition}\label{ProbDef}

A concrete example of time optimal path parametrization involves minimizing the time a car requires to traverse a specified smooth geometric path $\gamma : [0, S_{end}] \rightarrow \mathbb{R}^3$. For the sake of simplicity, we assume $\gamma$ is parametrized by arc length. Constraints consist of upper bounds on the magnitudes of velocity $v$ and acceleration $a$ of the car at every point along the path. Letting $h(s) := \left|\left|v(s)\right|\right|_2^2$ 
, we have \cite{LippBoyd, VerscheureDemeulenaere}: 
\begin{equation*}
\begin{aligned}
v(s) & = \frac{ds}{dt}\gamma'(s) = \sqrt{h(s)}\gamma'(s), \\
a(s) & = \frac{ds}{dt}\frac{d}{ds}\left(\sqrt{h(s)}\gamma'(s)\right) = \frac{1}{2}h'(s)\gamma'(s) + h(s)\gamma''(s).
\end{aligned}
\end{equation*}

The bound on velocity $\left|\left| v(s) \right|\right|_2 \leq v_{max}$ is equivalent to $h(s) \leq v_{max}^2$, while the bound on acceleration $|| a(s) ||_2 \leq F_{fr}$ translates into  

\begin{equation} \label{AccelerationMagnitudeBound}
|h'(s)| \leq 2\sqrt{F_{fr}^2 - ||\gamma''(s)||_2^2 h(s)^2}.
\end{equation}

%

For the very simple case of moving optimally along a straight line segment after starting from rest, the acceleration of the car switches from $+2 F_{fr}$ to $-2 F_{fr}$. At the switching point, the squared speed has discontinuous slope. To seamlessy deal with such behaviour, we drop the requirement that $h$ be a differentiable function and substitute Condition (\ref{AccelerationMagnitudeBound}) by $D^{+}h(s) \leq f^{+}(s, h(s))$ and $D^{-}h(s) \geq f^{-}(s, h(s))$ for suitably chosen functions $f^{+}$ and $f^{-}$.
In the most general form, we solve problem $P(B_u, B_l, f^{+}, f^{-})$:
\begin{equation} \label{ContinuousParameterProblem}
\begin{aligned}
& \underset{h : [a,b] \rightarrow [0,\infty)}{\text{minimize}}
& & \int_{a}^{b} \frac{ds}{\sqrt{h(s)}} \\
& \text{subject to}
& & D^{+}h(s) \leq f^{+}(s, h(s)), \  s \in [a, b), \\
&&& D^{-}h(s) \geq f^{-}(s, h(s)), \ s \in [a, b), \\
&&& B_l(s) \leq h(s) \leq B_u(s), \ s \in [a, b].
\end{aligned}
\end{equation}
The former example is clearly a special case of the latter problem, as can be seen by setting $a = 0$, $b = S_{end}$, $B_l \equiv 0$, $B_u = \min(v_{max}^2, F_{fr}^2 / ||\gamma''(s)||_2^2)$, and $f^{\pm}(s, h) = \pm 2 \sqrt{F_{fr}^2 - ||\gamma''(s)||_2^2 h^2}$. 

We note that if a pair of feasible solutions $h_1$ and $h_2$ satisfies $h_1(s) \leq h_2(s)$ for all $s \in [a, b]$, $h_2$ has a lower cost than $h_1$. 

\section{Characterization of Optimum}
\label{OptimumChar}

The main result of this section is presented in Theorem \ref{ContinuousSupremum}. We prove that the function defined as the pointwise supremum of all functions that are feasible for problem $P$ is also feasible and therefore optimal (Theorem~\ref{ContinuousSupremum}(a)). We use this characterization to show continuity of the optimum with respect to a natural parameter quantifying the degree of relaxation of constraints of $P$ (Theorem~\ref{ContinuousSupremum}(b)). Finally, we prove that the feasible set of $P$ is convex  (Theorem~\ref{ContinuousSupremum}(c)). To begin with, we note a useful result from Lipschitz analysis.    

\begin{theorem}\cite{LipschitzAnalysis}\label{LipschitzSuprema}
Let $\{ h_{\alpha} \}_{\alpha \in A}$ be an arbitrary non-empty family of uniformly bounded $\lambda$-Lipschitz functions defined on interval $[a,b]$. Functions $\overline{h}, \underline{h} : [a, b] \rightarrow \mathbb{R}$, defined by 
\[
\overline{h}(s) = \sup_{\alpha \in A} h_{\alpha}(s), \ \underline{h}(s) = \inf_{\alpha \in A} h_{\alpha}(s),
\]
for all $s \in [a,b]$, are well-defined and $\lambda$-Lipschitz.
\end{theorem}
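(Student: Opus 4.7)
The plan is to separate the two assertions (well-definedness and $\lambda$-Lipschitzness) and handle them by direct, elementary manipulations; no machinery beyond properties of suprema over $\mathbb{R}$ should be required. For well-definedness, I would invoke the uniform bound: pick $M_1 \leq M_2$ so that $M_1 \leq h_\alpha(s) \leq M_2$ for every $\alpha \in A$ and every $s \in [a,b]$. At each $s$, the set $\{h_\alpha(s)\}_{\alpha \in A}$ is then a non-empty bounded subset of $\mathbb{R}$, so $\overline{h}(s)$ and $\underline{h}(s)$ exist in $[M_1, M_2] \subset \mathbb{R}$.

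For the Lipschitz bound on $\overline{h}$, I would fix arbitrary $s, s' \in [a,b]$ and exploit the fact that each $h_\alpha$ satisfies $h_\alpha(s) \leq h_\alpha(s') + \lambda |s - s'|$. Since $h_\alpha(s') \leq \overline{h}(s')$ for every $\alpha$, this yields $h_\alpha(s) \leq \overline{h}(s') + \lambda |s - s'|$. Taking the supremum in $\alpha$ on the left gives $\overline{h}(s) \leq \overline{h}(s') + \lambda |s - s'|$. Swapping the roles of $s$ and $s'$ produces the reverse inequality, and combining them proves $|\overline{h}(s) - \overline{h}(s')| \leq \lambda |s - s'|$.

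For $\underline{h}$, I would reduce to the supremum case by noting that $\{-h_\alpha\}_{\alpha \in A}$ is still a uniformly bounded family of $\lambda$-Lipschitz functions, and that $\underline{h} = -\sup_{\alpha \in A}(-h_\alpha)$. Applying the previous paragraph to $\{-h_\alpha\}$ then transfers the Lipschitz property directly to $\underline{h}$.

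The argument presents no real obstacle. The only points worth being careful about are, first, that uniform boundedness is genuinely needed (without it the pointwise supremum could equal $+\infty$ on all of $[a,b]$, obstructing both the real-valuedness and the Lipschitz statement), and second, that the Lipschitz constant $\lambda$ is inherited by $\overline{h}$ without inflation, which works precisely because the key inequality $h_\alpha(s) \leq h_\alpha(s') + \lambda |s - s'|$ decouples the parameter $\alpha$ across the two sides, allowing the supremum to be passed through.
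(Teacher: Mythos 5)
Your proof is correct, and it is the standard elementary argument for this fact. Note that the paper itself does not prove Theorem~\ref{LipschitzSuprema}: it is stated with a citation to an external reference and used as a black box, so there is no in-paper proof to compare against. Your decomposition into well-definedness (via uniform boundedness) and the Lipschitz estimate (via the one-sided inequality $h_\alpha(s) \leq \overline{h}(s') + \lambda|s-s'|$, then taking the supremum over $\alpha$) is exactly the textbook route, and the reduction of the infimum case to the supremum case by negation is clean and correct.
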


\begin{theorem}\label{ContinuousSupremum}
Let $B_l, B_u : [a, b] \rightarrow \mathbb{R}$ be a pair of continuous functions with $B_u(s) \geq B_l(s)$ for all $s \in [a, b]$. Define region $F := \{ (s, h) \ \vert \ s \in [a, b], \ B_l(s) \leq h \leq B_u(s) \}$. Suppose $f^{+}, f^{-} : F  \rightarrow \mathbb{R}$ are a pair of continuous functions with $f^{+}(s, h) \geq f^{-}(s, h)$ for all $(s, h) \in F$. In particular, $|f^{\pm}| \leq B$ for some $B > 0$. For a real number $\xi \geq 0$, a function $h : [a, b] \rightarrow \mathbb{R}$ is called $\xi$-feasible if it satisfies the following conditions: 
\begin{enumerate}
\item \label{continuity_constraint} $h$ is continuous
\item \label{boundedness_constraint} $B_l(s) \leq h(s) \leq B_u(s)$ for all $s \in [a,b]$
\item \label{acceleration_lower_bound} $D^{-}h(s) \geq f^{-}(s, h(s)) - \xi$ for all $s \in [a, b]$
\item \label{acceleration_upper_bound} $D^{+}h(s) \leq f^{+}(s, h(s)) + \xi$ for all $s \in [a, b]$.
\end{enumerate}
Let $A_{\xi}$ denote the set of $\xi$-feasible functions. 
\begin{enumerate}

\item[a)] \label{closure_under_suprema} Assume $\emptyset \ne \{ h_{\alpha} \}_{\alpha \in C} \subseteq A_{\xi}$ for some (possibly uncountable) index set $C$. Then, $\overline{h}, \underline{h} : [a, b] \rightarrow \mathbb{R}$, defined by 
\[
\overline{h}(s) = \sup_{\alpha \in C} h_{\alpha}(s), \ \underline{h}(s) = \inf_{\alpha \in C} h_{\alpha}(s),
\]
for all $s \in [a, b]$, are $\xi$-feasible functions.
\item[b)] \label{continuity_wrt_perturbations} Assume $A_0 \ne \emptyset$. Define $\overline{h}_{\xi} = \sup_{h \in A_{\xi}} h$. Then,
\[
\left|\left| \overline{h}_{\xi} - \overline{h}_0  \right|\right|_{\infty} \rightarrow 0 \text{ as }  \xi \rightarrow 0.
\]

\item[c)] \label{feasible_convexity} Assume functions $f^{+}$ and $f^{-}$ are concave and convex in their second arguments respectively. For every $\xi \geq 0$, for every pair of $\xi$-feasible functions $h_1$ and $h_2$, and for every $\theta \in [0,1]$, the function $h_{\theta} = \theta h_1 + (1 - \theta) h_2$ is also $\xi$-feasible. 

\end{enumerate}

\end{theorem}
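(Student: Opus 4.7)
The plan is to exploit the corollary to Theorem \ref{RoughMeanValueTheorem} (pointwise increment bounds from one-sided Dini bounds) together with uniform continuity of $f^{\pm}$ on the compact region $F$ throughout. Since $|f^{\pm}| \leq B$, every $h_{\alpha} \in A_{\xi}$ satisfies $D^{+} h_{\alpha} \leq B + \xi$ and $D^{-} h_{\alpha} \geq -(B+\xi)$; combining the corollary with property (\ref{DiniSignFlip}) shows that every $h_{\alpha}$ is $(B+\xi)$-Lipschitz. For part (a) the family $\{h_{\alpha}\}$ is therefore equi-Lipschitz and equi-bounded, so Theorem \ref{LipschitzSuprema} gives that $\overline{h}$ and $\underline{h}$ are Lipschitz (hence continuous), and the pointwise bounds $B_l \leq \overline{h}, \underline{h} \leq B_u$ are immediate.

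\textbf{Part (a), Dini conditions.} The core is verifying the Dini inequalities for $\overline{h}$ (the case of $\underline{h}$ is symmetric). For the upper bound, fix $s \in [a, b)$; for each $s' > s$ choose $\alpha(s') \in C$ such that $h_{\alpha(s')}(s') \geq \overline{h}(s') - (s'-s)^{2}$. Equi-Lipschitzness of the family and Lipschitzness of $\overline{h}$ together imply $h_{\alpha(s')}(s'') \to \overline{h}(s)$ uniformly in $s'' \in [s, s']$ as $s' \downarrow s$. Uniform continuity of $f^{+}$ on $F$ gives $f^{+}(s'', h_{\alpha(s')}(s'')) \leq f^{+}(s, \overline{h}(s)) + \eta(s'-s)$ on $[s, s']$ with $\eta(\cdot) \to 0$, and applying the corollary to $h_{\alpha(s')}$ over $[s, s']$ yields
\[
h_{\alpha(s')}(s') \leq h_{\alpha(s')}(s) + \bigl(f^{+}(s, \overline{h}(s)) + \xi + \eta\bigr)(s' - s).
\]
Using $h_{\alpha(s')}(s) \leq \overline{h}(s)$ and the choice of $\alpha(s')$, dividing by $s' - s$, and letting $s' \downarrow s$ yields $D^{+}\overline{h}(s) \leq f^{+}(s, \overline{h}(s)) + \xi$. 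The lower Dini bound is analogous, using $\alpha(s')$ with $h_{\alpha(s')}(s) \geq \overline{h}(s) - (s'-s)^{2}$, the corollary for $D^{-}$ via property (\ref{DiniSignFlip}), and $\overline{h}(s') \geq h_{\alpha(s')}(s')$.

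\textbf{Part (b).} Monotonicity $A_{\xi'} \subseteq A_{\xi}$ for $\xi' \leq \xi$ gives $\overline{h}_{\xi} \geq \overline{h}_{0}$, with $\xi \mapsto \overline{h}_{\xi}$ pointwise nonincreasing as $\xi \downarrow 0$. For any sequence $\xi_{n} \downarrow 0$, the family $\{\overline{h}_{\xi_{n}}\}$ is equi-Lipschitz (constant $B + \xi_{1}$) and uniformly bounded, so Arzel\`a--Ascoli delivers a uniformly convergent subsequence with continuous limit $h^{*} \geq \overline{h}_{0}$. The crux is that $h^{*} \in A_{0}$: uniform continuity of $f^{+}$ on $F$ together with the corollary applied to each $\overline{h}_{\xi_{n}}$ gives
\[
\overline{h}_{\xi_{n}}(s') \leq \overline{h}_{\xi_{n}}(s) + \bigl(f^{+}(s, \overline{h}_{\xi_{n}}(s)) + \xi_{n} + o(1)\bigr)(s' - s),
\]
with $o(1) \to 0$ as $s' \downarrow s$ uniformly in $n$. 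Passing to the limit along the subsequence and then $s' \downarrow s$ recovers $D^{+} h^{*} \leq f^{+}(\cdot, h^{*})$, and symmetrically $D^{-} h^{*} \geq f^{-}(\cdot, h^{*})$; thus $h^{*} \in A_{0}$, forcing $h^{*} \leq \overline{h}_{0}$ and hence $h^{*} = \overline{h}_{0}$. Since every subsequence of $(\overline{h}_{\xi_{n}})$ has a further subsequence converging uniformly to $\overline{h}_{0}$, the entire sequence does.

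\textbf{Part (c) and main obstacle.} Part (c) follows by direct Dini calculus via properties 1--3:
\[
D^{+} h_{\theta} \leq \theta D^{+} h_{1} + (1-\theta) D^{+} h_{2} \leq \theta f^{+}(s, h_{1}) + (1-\theta) f^{+}(s, h_{2}) + \xi \leq f^{+}(s, h_{\theta}) + \xi,
\]
where the last step uses concavity of $f^{+}(s, \cdot)$; the lower Dini inequality follows from the symmetric chain using convexity of $f^{-}(s, \cdot)$, and continuity and the pointwise bounds are preserved under convex combinations. The principal obstacle is part (a), specifically the passage of Dini inequalities through pointwise suprema: suprema do not generally commute with differentiation, and the bounding functions $f^{\pm}$ depend nonlinearly on the unknown $h$. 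The decisive device is to let $\alpha$ depend on the target $s'$ with a quadratic $(s'-s)^{2}$ slack (chosen so that the error vanishes after dividing by $s'-s$), while exploiting uniform Lipschitz bounds on the family and uniform continuity of $f^{\pm}$ on $F$ to linearize the right-hand side over shrinking intervals.
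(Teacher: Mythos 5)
Your proof is correct; it differs from the paper's in the two substantive parts. For part (a), the paper argues by contradiction at a single pair of points $(s_0,s_1)$: it assumes the Dini bound fails at $s_0$, selects $\alpha$ nearly achieving the supremum at $s_0$, shows $h_\alpha$ stays in a neighborhood where $f^-$ is controlled, and applies the corollary of Theorem~\ref{RoughMeanValueTheorem} to force $h_\alpha(s_1)>\overline h(s_1)$, contradicting the definition of $\overline h$. You give a direct argument in which the witness $\alpha(s')$ varies with the target $s'$ and carries a quadratic slack $(s'-s)^2$ that vanishes after dividing by $s'-s$; the underlying ingredients (equi-Lipschitzness via Theorem~\ref{LipschitzSuprema}, uniform continuity of $f^{\pm}$, the same corollary) are identical, so this is a stylistic rather than structural difference, though your version avoids the contradiction scaffolding. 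For part (b) the difference is more genuine: the paper identifies $\tilde h:=\inf_{\xi>0}\overline h_\xi$ with $\inf_{0<\nu\le\xi}\overline h_\nu$ for each fixed $\xi>0$, re-applies part~(a) to the family $\{\overline h_\nu\}_{0<\nu\le\xi}\subseteq A_\xi$ to conclude $\tilde h\in A_\xi$, intersects over $\xi$ to get $\tilde h=\overline h_0$, and then obtains uniform convergence from Dini's monotone-convergence theorem. You instead invoke Arzel\`a--Ascoli to extract a uniformly convergent subsequence, re-derive the Dini inequalities for the limit (a second pass through essentially the part-(a) estimate) to show the limit lies in $A_0$, and close with the subsequence-of-subsequences argument. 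Your route is self-contained and does not need Dini's theorem, but it repeats analytic work that the paper's approach gets for free by re-using part~(a). Part~(c) is the same direct Dini calculus in both.
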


\begin{proof}
(a) We only give detailed proof of the claim for $\xi = 0$ and $\overline{h}$. The corresponding result for $\overline{h}$ when $\xi > 0$ can be recovered from the result for $\xi = 0$ by redefining $f^{\pm} \rightarrow f^{\pm} \pm \xi$. Similarly, the result for $\underline{h}$ can be recovered from the result for $\overline{h}$ by redefining $B_l \rightarrow -B_u$, $B_u \rightarrow -B_l$, $f^{\pm} \rightarrow -f^{\mp}$ and using Properties $(1)$-$(5)$ of Dini derivatives. 

Since functions $B_u$ and $B_l$ are continuous on $[a,b]$, they are bounded. As $C \neq \emptyset$, $\overline{h}$ is well defined. For arbitrary $s \in [a, b]$, taking the supremum over $\alpha \in C$ of the inequality $B_l(s) \leq h_{\alpha}(s) \leq B_u(s)$, we verify $\overline{h}$ satisfies Condition $(2)$. In particular, $f^{+}$ and $f^{-}$ are defined at all points $\left(s, \overline{h}(s) \right)$ for $s \in [a, b]$.

The fact that $|f^{\pm}| \leq B$ implies $h_{\alpha}$ is $B$-Lipschitz for all $\alpha \in C$. By Theorem \ref{LipschitzSuprema}, $\overline{h}$ is also $B$-Lipschitz; in particular $\overline{h}$ is continuous, thus verifying Condition $(1)$.
 

\begin{figure}[h!]
  \includegraphics[scale = 0.55]{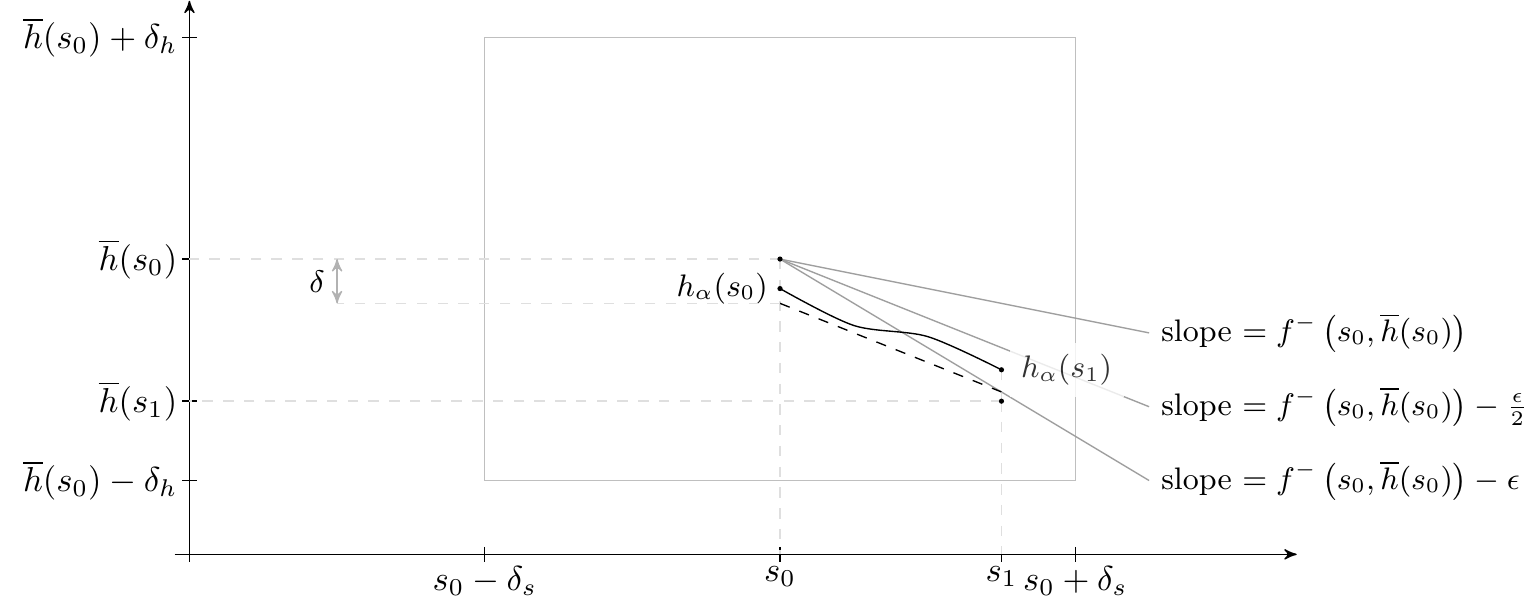}
  \caption{An illustration of the proof of Condition (3) of Theorem 3.}
  \label{fig:CharacterizationContradiction}
\end{figure}

Next, we show $\overline{h}$ satisfies Condition $(3)$. Assume, for a contradiction, $D^{-}\overline{h}(s_0) < f^{-}(s_0, \overline{h}(s_0))$ for some $s_0 \in [a, b)$ (see Figure  \ref{fig:CharacterizationContradiction}). Hence, there exists $\epsilon > 0$ such that $D^{-}\overline{h}(s_0) < f^{-} \left( s_0, \overline{h}(s_0) \right) - \epsilon$. 

Continuity of $f^{-}$ implies there exist $\delta_s, \delta_h > 0$ such that $f^{-}\left( s, h \right) \geq f^{-} \left( s_0, \overline{h}(s_0) \right) - \frac{\epsilon}{2}$ for all $(s, h) \in F \cap \left[ s_0 - \delta_s, s_0 + \delta_s \right] \times \left[ \overline{h}(s_0) - \delta_h, \overline{h}(s_0) + \delta_h \right]$. Keeping $\delta_h$ intact while shrinking $\delta_s$ if necessary, we may assume 
\begin{equation}
\delta_h > (B + \epsilon) \delta_s.
\end{equation}

The definition of $D^{-}$ implies there exists a decreasing sequence $(s_n)_{n \geq 1}$ tending to $s_0$ as $n \rightarrow \infty$ and $\frac{\overline{h}(s_n) - \overline{h}(s_0)}{s_n - s_0} \leq f^{-} \left( s_0, \overline{h}(s_0) \right) - \epsilon$ for all $n \geq 1$. In particular, we may assume $s_1 \in \left( s_0, s_0 + \delta_s \right)$ satisfies 
\begin{equation} \label{h_s_1_small}
\overline{h}(s_1) \leq \overline{h}(s_0) + \left( f^{-} \left( s_0, \overline{h}(s_0) \right) - \epsilon \right) \left( s_1 - s_0 \right).
\end{equation}
Define 
\begin{equation}
\delta = \min \left( \frac{\epsilon}{2}(s_1 - s_0), \delta_h - B \delta_s \right).
\end{equation}
The definition of $\overline{h}$ implies there exists $\alpha \in C$ such that 
\begin{equation}
\overline{h}(s_0) - \delta < h_{\alpha}(s_0) \leq \overline{h}(s_0). 
\end{equation}
Since $h_{\alpha}$ is $B$-Lipschitz, it follows that for all $s \in [s_0, s_1]$ we have 
\begin{equation}
\begin{aligned}
\left| h_{\alpha}(s) - \overline{h}(s_0) \right| & = \left| h_{\alpha}(s) - h_{\alpha}(s_0) + h_{\alpha}(s_0) - \overline{h}(s_0) \right| \\
&  \leq \left| h_{\alpha}(s) - h_{\alpha}(s_0) \right| + \left| h_{\alpha}(s_0) - \overline{h}(s_0) \right| \\
& \leq B(s - s_0) + \delta \\
& \leq B \delta_s + \delta_h - B \delta_s = \delta_h.
\end{aligned} 
\end{equation}
Hence, for all $s \in [s_0, s_1]$
\[
\left( s, h_{\alpha}(s) \right) \in F \cap \left[ s_0 - \delta_s, s_0 + \delta_s \right] \times \left[ \overline{h}(s_0) - \delta_h, \overline{h}(s_0) + \delta_h \right],
\]implying 
\begin{equation}
D^{-}h_{\alpha}(s) = f^{-} \left( s, h_{\alpha}(s) \right) \geq f^{-} \left( s_0, \overline{h}(s_0) \right) - \frac{\epsilon}{2}.
\end{equation} 
By the second corollary to Theorem \ref{RoughMeanValueTheorem}, 
\begin{equation} \label{condition_three_contradiction}
\begin{aligned}
h_{\alpha}(s_1) & \geq h_{\alpha}(s_0) + \left( f^{-} \left( s_0, \overline{h}(s_0) \right) - \frac{\epsilon}{2} \right) \left( s_1 - s_0 \right) \\
&  > \overline{h}(s_0) - \delta + \left( f^{-}(s_0, \overline{h}(s_0)) - \frac{\epsilon}{2} \right) \left( s_1 - s_0 \right) \\
&  \geq \overline{h}(s_0) - \left( s_1 - s_0 \right)\frac{\epsilon}{2} + \left( f^{-}(s_0, \overline{h}(s_0)) - \frac{\epsilon}{2} \right) \left( s_1 - s_0 \right) \\
& \geq \overline{h}(s_1),
\end{aligned}
\end{equation}
where the last inequality follows from Equation (\ref{h_s_1_small}). However, Equation (\ref{condition_three_contradiction}) violates the definition of $\overline{h}$ at $s_1$. This gives the desired contradiction, and shows $\overline{h}$ satisfies Condition (\ref{acceleration_lower_bound}). The proof that $\overline{h}$ satisfies Condition (\ref{acceleration_upper_bound}) is ommitted since it can be derived analogously.

\medskip

(b)
Consider arbitrary real numbers $0 \leq \xi_1 \leq \xi_2$. According to part (a) of Theorem \ref{ContinuousSupremum}, $\overline{h}_{\xi_1} \in A_{\xi_1} \subseteq A_{\xi_2}$. This implies $\overline{h}_{\xi_1} \leq \overline{h}_{\xi_2}$. Hence, for every $s \in [a, b]$, $\overline{h}_{\xi}(s)$ is monotonically increasing in $\xi \geq 0$ and bounded below by $\overline{h}_0(s)$. As a result, function $\tilde{h} : [a, b] \rightarrow \mathbb{R}$, given by $\tilde{h}(s) = \inf_{\xi  > 0} \overline{h}_{\xi}(s)$ for all $s \in [a, b]$, is well defined and satisfies $\tilde{h} \geq \overline{h}_0$.

On the other hand, monotonicity of $\overline{h}_{\xi}(s)$ implies $\tilde{h}(s) = \inf_{0 < \nu \leq \xi} \overline{h}_{\nu}(s)$ for every $\xi > 0$. Since $\overline{h}_{\nu} \in A_{\nu} \subseteq A_{\xi}$ for every $\nu \leq \xi$, another application of part (a) of Theorem \ref{ContinuousSupremum} to the non-empty set of functions $\left( \overline{h}_{\nu} \right)_{0 < \nu \leq \xi} \subseteq A_{\xi}$ yields $\tilde{h} \in A_{\xi}$. Since $\xi > 0$ was arbitrary, we have $\tilde{h} \in {\cap}_{\xi > 0} A_{\xi} = A_0$. By definition of $\overline{h}_0$, we thus have $\overline{h}_0 \geq \tilde{h}$. 

Combining previous observations, we get $\tilde{h} = \overline{h}_0$. Thus, $\overline{h}_{\xi}(s) \downarrow \overline{h}_0(s)$ as $\xi \downarrow 0$ for all $s \in [a, b]$. Since functions $(\overline{h}_{\xi})_{\xi \geq 0}$ are continuous on interval $[a, b]$, uniform convergence follows.

\medskip

(c)
Consider any $\xi \geq 0$ and $\theta \in [0, 1]$. Function $h_{\theta}$ clearly satisfies Conditions (\ref{continuity_constraint}) and (\ref{boundedness_constraint}) of Theorem \ref{ContinuousSupremum}, so we turn to deriving Condition (\ref{acceleration_lower_bound}). As in part (a), the proof of Condition (\ref{acceleration_upper_bound}) is ommitted as it can be derived analogously. We have: 
\begin{equation}
\begin{aligned}
D^{-} h_{\theta}(s) &= D^{-} (\theta h_1 + (1 - \theta) h_2)(s) \\
& \geq \theta D^{-}h_1(s) + (1 - \theta) D^{-}h_2(s) \\
& \geq \theta (f^{-}(s,h_1(s)) - \xi) + (1 - \theta) (f^{-}(s, h_2(s)) - \xi) \\
& \geq f^{-}(s, \theta h_1(s) + (1 - \theta) h_2(s)) - \xi \\
& = f^{-}(s, h_{\theta}(s)) - \xi.
\end{aligned}
\end{equation}

The first inequality above follows from Properties ($2$) and ($3$) of Dini derivatives, whereas the second inequality follows from $\xi$-feasibility of $h_1$ and $h_2$. Finally, the last inequality follows from convexity of $f^{-}$ in its second argument. 
\end{proof}

\section{Algorithm}
\label{NumAg}

In this section, we recall the algorithm presented in \cite{TOPPRA} for obtaining a numerical approximation to the optimal solution characterized in the previous section.    

We first recall standard concepts from numerical analysis, which will be used for describing and analyzing the algorithm. A \textit{discretization} $D = D([a, b], (s_i)_{i = 0}^n)$ of interval $[a, b]$ is an increasing sequence of points $(s_i)_{i = 0}^n$ satisfying $a = s_0 < ... < s_n = b$.  We denote its cardinality by $|D| = n+1$, and its resolution by $\Delta (D)  = \max_{1 \leq i \leq n} | s_i - s_{i-1} |$. 

\begin{algorithm}\label{BackwardForwardAlgorithm}
\SetAlgoLined
\KwData{$D = (s_i)_{i=0}^n$, $(B_l(s_i))_{i = 1}^n$, $(B_u(s_i))_{i = 1}^n$, $f^{+}$, $f^{-}$}
\KwResult{$(\hat{h}_i)_{i=0}^n$}

${h}_n^{(b)} = B_u(s_n)$\\
\For{$i = n-1$ to $0$}{
${h}_i^{(b)} \leftarrow \max \{h  \vert  h \leq B_u(s_i), h + f^{-}(s_i, h)(s_{i+1} - s_i) \leq {h}_{i+1}^{(b)}\}$\\
\If{${h}_i^{(b)} = -\infty$}{
	return null\\
}
}
$h_0^{(f)} = h_0^{(b)}$\\
\For{$i = 1$ to $n$}{
${h}_i^{(f)} \leftarrow \max \{ h  \vert  h \leq {h}_{i}^{(b)}, h \leq {h}_{i-1}^{(f)} + f^{+}(s_{i-1}, {h}_{i-1}^{(f)})(s_i - s_{i-1}) \}$ \\
\If{${h}_i^{(f)} = -\infty$}{
	return null\\
}
}
return $(\hat{h}_i)_{i = 0}^n = (h_i^{(f)})_{i = 0}^n$
\caption{Backward-Forward Algorithm}
\end{algorithm}

Given a discretization $D$ and problem $P(B_u, B_l, f^{+}, f^{-})$, a numerical procedure aims to find approximations $(\hat{h}_i)_{i = 0}^n$ to the optimal solution $\overline{h} = \overline{h}(P)$ at points $(s_i)_{i = 0}^n$. Its \textit{error} is defined as $\rho((\hat{h}_i)_{i = 0}^n, P, D) = \max_{0 \leq i \leq n}|\hat{h}_i - \overline{h}(s_i)|$, and it is said to be \textit{asymptotically optimal} if $\rho \rightarrow 0$ as $\Delta(D) \rightarrow 0$.  

Algorithm \ref{BackwardForwardAlgorithm} is a recently-proposed algorithm for solving problem $P$ numerically. It incrementally constructs an approximation to the optimum in a pair of sweeps through $(s_i)_{i = 0}^n$. As a result, it has linear \textit{time-complexity} in $|D|$. This makes it orders of magnitude faster than approaches employing general purpose convex optimization libraries, whose time complexity is super-linear in $|D|$ \cite{TOPPRA}. However, despite its computational efficiency, Algorithm \ref{BackwardForwardAlgorithm} had been proven to converge to optimal solutions for only a subclass of problems that can be optimally solved by convex programming in \cite{LippBoyd}.

\section{Asymptotic Optimality}
\label{AsympOpt}

The main result of this section is Theorem \ref{ConsistencyProof} which proves asymptotic optimality of Algorithm \ref{BackwardForwardAlgorithm} for all feasible problems $P$ amenable to convex optimization approaches. 

First, in Theorem \ref{ODETheorem} we recall an important result, which:
\begin{enumerate}
\item[a)] characterizes a lower bound on the length of the interval on which a solution to an ordinary differential equation is defined

\item[b)] proves that a continuous function can never exceed a differentiable function whose derivative upper bounds the former's Dini derivative.  
\end{enumerate}

\begin{theorem} \cite{KhalilBook} \label{ODETheorem}
In addition to the setup of Theorem \ref{ContinuousSupremum}, let:
\begin{enumerate}

\item $B_u$ and $B_l$ satisfy $B_u > B_l$

\item for every pair of continuous functions $U,L : [a, b] \rightarrow \mathbb{R}$ such that $B_l < L < U < B_u$, there exist $\lambda_s, \lambda_h > 0$ such that $f^{+}$ and $f^{-}$ are $\lambda_s$-Lipschitz and $\lambda_h$-Lipschitz on $\{(s, h) \vert s \in [a, b], L(s) \leq h \leq U(s)\}$ in their first and second arguments respectively.

\end{enumerate}

Consider arbitrary $g \in \{f^{+}, f^{-}\}$, $s_0 \in [a, b)$, and $h_0$ such that $L(s_0) < h_0 < U(s_0)$.

\begin{enumerate}

\item[a)] There exists $\delta > 0$ such that the initial value problem 
\[
h'(s) = g(s, h(s)) {\text{  subject to }} h(s_0) = h_0
\]
admits a unique solution on interval $[s_0, s_0 + \delta]$. Furthermore, we may choose 
\[
s_0 + \delta = \min \left( b, \inf\{s \geq s_0 \vert h(s) \notin (L(s), U(s))\} \right).
\]

\item[b)] Every continuous function $\tilde{h} : [s_0, s_0 + \tilde{\delta}] \rightarrow \mathbb{R}$, such that $L(s) < \tilde{h}(s) < U(s)$ and $D^{+}\tilde{h}(s) \leq g(s, \tilde{h}(s))$ for all $s \in [s_0, s_0 + \tilde{\delta})$, satisfies 
\[
\tilde{h}(s) \leq h(s)
\]
for all $s \in [s_0, s_0 + \min(\delta, \tilde{\delta})]$.

\end{enumerate}

\end{theorem}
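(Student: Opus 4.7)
The plan is to handle parts (a) and (b) separately: part (a) is the classical Picard--Lindel\"of existence and uniqueness theorem adapted to a solution confined to the open strip $\{L < h < U\}$, and part (b) is a one-sided comparison principle that I would prove by comparing $\tilde h$ to a slightly perturbed solution of the ODE and contradicting a potential crossing via a Dini derivative computation.

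For part (a), since $h_0 \in (L(s_0), U(s_0))$ and $L, U$ are continuous, I would first choose $r > 0$ so that the closed rectangle $K_r = [s_0, s_0 + r] \times [h_0 - r, h_0 + r]$ lies in the interior of the strip, extract from the hypothesis a Lipschitz constant $\lambda_h$ in $h$ and a bound $M$ on $|g|$ over $K_r$, and apply the Banach fixed point theorem to the Picard operator $T[\phi](s) = h_0 + \int_{s_0}^s g(t, \phi(t))\,dt$ on the closed ball of continuous functions $[s_0, s_0+\delta_1] \to [h_0 - r, h_0 + r]$, with $\delta_1 = \min(r, r/M, 1/(2\lambda_h))$. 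This yields a unique local $C^1$ solution. I would then iterate the construction at the right endpoint of every current existence interval: as long as the graph of $h$ stays strictly inside the strip, continuity of $L$ and $U$ lets me rebuild a fresh rectangle around the new endpoint of positive width and height, so the continuation extends by a positive amount. A standard maximality argument then shows the resulting solution is defined exactly up to $\min\bigl(b,\, \inf\{s \geq s_0 : h(s) \notin (L(s), U(s))\}\bigr)$.

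For part (b), I would perturb the ODE slightly upward. For each $\epsilon > 0$ let $y_\epsilon$ solve
\begin{equation*}
y_\epsilon'(s) = g(s, y_\epsilon(s)) + \epsilon, \qquad y_\epsilon(s_0) = h_0 + \epsilon,
\end{equation*}
which by part (a) applied to $g + \epsilon$ exists on $[s_0, s_0 + \min(\delta, \tilde\delta) - \eta]$ for arbitrary $\eta > 0$ and all sufficiently small $\epsilon$; continuous dependence of ODEs on initial data and parameters gives $y_\epsilon \to h$ uniformly on this interval as $\epsilon \downarrow 0$. Under the tacit assumption $\tilde h(s_0) \leq h_0$ (without which part (b) already fails at $s_0$), I would define $s_1$ as the supremum of those $s$ for which $\tilde h < y_\epsilon$ holds throughout $[s_0, s]$ and argue by contradiction. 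If $s_1$ lay strictly before the right endpoint, continuity would force $\tilde h(s_1) = y_\epsilon(s_1)$ together with a sequence $s_n \downarrow s_1$ satisfying $\tilde h(s_n) \geq y_\epsilon(s_n)$, hence $D^+(\tilde h - y_\epsilon)(s_1) \geq 0$. On the other hand, by property~(\ref{DiniDifferentiableCombination}) of Dini derivatives together with the Dini inequality on $\tilde h$,
\begin{equation*}
D^+(\tilde h - y_\epsilon)(s_1) = D^+\tilde h(s_1) - y_\epsilon'(s_1) \leq g(s_1, y_\epsilon(s_1)) - g(s_1, y_\epsilon(s_1)) - \epsilon = -\epsilon,
\end{equation*}
which is the needed contradiction. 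Thus $\tilde h < y_\epsilon$ on the whole common interval, and sending $\epsilon, \eta \downarrow 0$ yields $\tilde h \leq h$ on $[s_0, s_0 + \min(\delta, \tilde\delta)]$.

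The main obstacle in both parts is the uniformity of the Lipschitz and boundedness data for $g$ near the strip boundary: the hypothesis supplies these only on compact sub-strips $\{L' \leq h \leq U'\}$ with $B_l < L' < L$ and $U < U' < B_u$, so any global argument must carefully keep the graphs of $h$, $y_\epsilon$, and $\tilde h$ uniformly bounded away from $\partial\{L < h < U\}$. In (a) this is built into the explicit escape-time formula for $\delta$; in (b) it is handled by shrinking the existence interval by $\eta$ and then taking $\eta \downarrow 0$ at the end. Once the Dini calculus of Section~\ref{NonSmoothAnalysis} is in hand, everything else is routine.
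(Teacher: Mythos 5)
The paper does not prove Theorem~\ref{ODETheorem}; it is stated as a cited result from~\cite{KhalilBook}, so there is no in-paper argument to compare against. Your proof is correct and is essentially the standard textbook treatment: Picard--Lindel\"of together with a continuation argument for part~(a), and for part~(b) the classical $\epsilon$-perturbation proof of the comparison lemma (this is precisely how the corresponding result is proved in Khalil's book), including the correct use of Property~(\ref{DiniDifferentiableCombination}) to compute $D^{+}(\tilde h - y_\epsilon)(s_1)$ since $y_\epsilon$ is differentiable. Your observation that part~(b) tacitly requires $\tilde h(s_0) \le h_0$ is a genuine catch --- the statement as written fails at $s = s_0$ without it --- although in the paper's only use of this theorem (inside Lemma~\ref{BrokenLineLemma}) the initial values are taken equal, so the omission is harmless there. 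The one place you gloss over a small logical loop is in asserting existence of $y_\epsilon$ on $[s_0, s_0 + \min(\delta,\tilde\delta) - \eta]$ via ``continuous dependence'': one must first establish that $y_\epsilon$ exists long enough to compare to $h$; this is handled by a routine bootstrap (the escape time of $y_\epsilon$ from any compact sub-strip containing the graph of $h$ is bounded below uniformly in small $\epsilon$), but it deserves a sentence rather than an appeal to continuous dependence as a black box.
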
 

Before turning to the main result of the section, we give a definition. For a problem $P(B_u, B_l, f^{+}, f^{-})$ and discretization $D([a, b], (s_i)_{i = 0}^n)$, we call a sequence $(h_i)_{i = 0}^n$ \textit{admissible} if $B_l(s_i) \leq h_i \leq B_u(s_i)$ for all $0 \leq i \leq n$, and $f^{-}(s_i, h_i) \leq \frac{h_{i+1} - h_i}{s_{i+1} - s_i} \leq f^{+}(s_i, h_i)$ for all $0 \leq i \leq n-1$. Additionally, we will denote by $\overline{h}(P)$ ($\underline{h}(P)$) the pointwise supremum (infimum) of all feasible functions for $P$. 

\begin{theorem} \label{ConsistencyProof}
Assume in addition to the setup of Theorem \ref{ODETheorem}, problem $P(B_u, B_l, f^{+}, f^{-})$ is feasible and $\overline{h} := \overline{h}(P) > \underline{h}(P) =: \underline{h}$. For every $\epsilon > 0$, there exists an $\eta > 0$ such that for every discretization $D([a, b], (s_i)_{i = 0}^n)$ with resolution $\Delta(D) \leq \eta$, Algorithm \ref{BackwardForwardAlgorithm} returns an admissible sequence $(\hat{h}_i)_{i = 0}^n$ with $\rho((\hat{h}_i)_{i = 0}^n, P, D) < \epsilon$.
\end{theorem}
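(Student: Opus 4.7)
My plan is a two-sided sandwich argument that bounds $|\hat{h}_i - \overline{h}(s_i)|$ uniformly by $\epsilon$ once $\Delta(D)$ is sufficiently small. Both directions will exploit Theorem \ref{ContinuousSupremum}(b) to absorb the unavoidable $O(\Delta(D))$ discretization slack into the relaxation parameter $\xi$. The Lipschitz hypotheses inherited from Theorem \ref{ODETheorem}, valid on a strip around the trajectory thanks to $\underline{h} < \overline{h}$, together with $|f^\pm|\leq B$ (automatic from continuity on a compact set), ensure every linear interpolant of an admissible sequence is $B$-Lipschitz.

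For the upper bound, I form the continuous piecewise-linear interpolant $\tilde{h}$ of $(\hat{h}_i)_{i=0}^n$. On each subinterval $(s_i,s_{i+1})$, the Dini derivatives of $\tilde{h}$ equal the constant slope $(\hat{h}_{i+1}-\hat{h}_i)/(s_{i+1}-s_i)$, which lies in $[f^-(s_i,\hat{h}_i), f^+(s_i,\hat{h}_i)]$ by admissibility of the output. Because $\tilde{h}$ is $B$-Lipschitz and $f^\pm$ are jointly Lipschitz with constants $\lambda_s, \lambda_h$, replacing the anchor $(s_i,\hat{h}_i)$ by $(s, \tilde{h}(s))$ costs at most $(\lambda_s + \lambda_h B)\Delta(D)$; a uniform-continuity argument similarly absorbs any violation of the box constraint into an $O(\Delta(D))$ enlargement of $[B_l, B_u]$. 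This makes $\tilde{h}$ a $\xi$-feasible function for a slightly enlarged problem with $\xi = O(\Delta(D))$, so Theorem \ref{ContinuousSupremum}(a) yields $\tilde{h}\leq \overline{h}_\xi$ and Theorem \ref{ContinuousSupremum}(b) (applied to both the box and Dini perturbations) delivers $\hat{h}_i = \tilde{h}(s_i) \leq \overline{h}(s_i) + \epsilon/2$ once $\Delta(D)$ is below an explicit threshold.

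For the harder lower bound I construct discrete sequences that each sweep must dominate. Set $\check{h}_i := \overline{h}(s_i) - C\,e^{K(s_n - s_i)}$ with $K := \lambda_h + 1$ and $C$ of order $\Delta(D)$ chosen large enough to absorb the second-order error. Using $D^-\overline{h}(s)\geq f^-(s,\overline{h}(s))$, the mean-value corollary to Theorem \ref{RoughMeanValueTheorem}, and Lipschitz continuity of $f^-$, the gain $CK\,e^{K(s_n - s_i)}\Delta$ contributed by the exponential perturbation dominates the $\lambda_h$-loss from evaluating $f^-$ at $\check{h}_i$ in place of $\overline{h}(s_i)$ because $K > \lambda_h$, yielding $\check{h}_i + f^-(s_i,\check{h}_i)(s_{i+1} - s_i)\leq \check{h}_{i+1}$ together with $\check{h}_n \leq B_u(s_n)$. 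A backward induction using the greedy max in the backward sweep then gives $h^{(b)}_i \geq \check{h}_i$. An analogous construction $\check{h}'_i := \overline{h}(s_i) - C'\,e^{K'(s_i - a)}$ dominates the forward sweep, the delicate step being the non-monotonicity of $f^+$ in its second argument, which I overcome via the inequality
\[
h_{i-1}^{(f)} + f^+(s_{i-1},h_{i-1}^{(f)})\Delta \geq \check{h}'_{i-1} + f^+(s_{i-1},\check{h}'_{i-1})\Delta + (h_{i-1}^{(f)} - \check{h}'_{i-1})(1 - \lambda_h\Delta),
\]
whose last term is non-negative whenever $\lambda_h\Delta(D) < 1$. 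The same mechanism verifies admissibility of the algorithm's output: in the case $h^{(f)}_i = h^{(b)}_i$, the backward constraint on $h^{(b)}_{i-1}$ combined with $h^{(f)}_{i-1}\leq h^{(b)}_{i-1}$ forces the forward slope to lie above $f^-(s_{i-1},h^{(f)}_{i-1})$, modulo the same $(1 - \lambda_h\Delta)$ factor.

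The principal obstacle I expect is this forward-sweep induction, because the update uses $f^+$ evaluated at the running value $h^{(f)}_{i-1}$ rather than at a globally fixed reference, so non-monotonicity of $f^+$ forces both the exponential perturbation and the resolution constraint $\lambda_h\Delta(D) < 1$. A secondary loose end is ruling out that the algorithm returns null for small $\Delta(D)$: this follows because $\check{h}_i$ and $\check{h}'_i$ exhibit non-empty admissible sets at each step once $\Delta(D)$ lies below a threshold depending only on $\lambda_s,\lambda_h, B$ and $\min_s(\overline{h}(s) - \underline{h}(s)) > 0$. Combining the sandwich bounds and choosing $\eta$ so that every $O(\Delta(D))$ error term is at most $\epsilon/2$ completes the proof.
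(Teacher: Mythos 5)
Your two-sided sandwich strategy matches the paper at the top level, and your upper-bound half is essentially the paper's argument: build a piecewise-linear interpolant of $(\hat h_i)$, show it is $\xi$-feasible with $\xi = O(\Delta(D))$, and invoke Theorem~\ref{ContinuousSupremum}(a)--(b). The one loose end there is that you "absorb the box-constraint violation into an $O(\Delta(D))$ enlargement of $[B_l,B_u]$" and then invoke Theorem~\ref{ContinuousSupremum}(b); but that theorem only controls perturbations of the Dini constraints through $\xi$, not of the box constraints, so as stated it does not deliver what you need. The paper sidesteps this by defining $\tilde h(s_i) = \hat h_i - \delta_1$ rather than $\hat h_i$, which together with uniform continuity of $B_u$ keeps the interpolant under $B_u$ exactly rather than approximately.

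The lower-bound half is where your proposal genuinely breaks. You anchor the Gr\"onwall perturbation directly at $\overline h$, writing $\check h_i = \overline h(s_i) - C\,e^{K(s_n-s_i)}$ and then using Lipschitz continuity of $f^{\pm}$ to (i) pass from the Dini bound $D^{\pm}\overline h(s) \lessgtr f^{\pm}(s,\overline h(s))$ to a one-step inequality at $(s_{i-1},\overline h(s_{i-1}))$, and (ii) compare $f^{\pm}$ at $\check h_i$ versus at $\overline h(s_i)$. But the Lipschitz hypothesis inherited from Theorem~\ref{ODETheorem} is only available on strips $\{(s,h): L(s)\leq h\leq U(s)\}$ with $B_l < L < U < B_u$ \emph{strictly}, and $\overline h$ is in general allowed to touch $B_u$. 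At such a point no admissible strip contains $(s,\overline h(s))$, and since your $\check h_i$ converge uniformly to $\overline h$ as $\Delta(D)\to0$ there is no single strip (with fixed $\lambda_s,\lambda_h$) in which your whole construction lives. This is not a pathological corner case: in the paper's motivating example $f^{\pm}(s,h) = \pm 2\sqrt{F_{fr}^2-\|\gamma''(s)\|_2^2\,h^2}$, the $h$-derivative of $f^{\pm}$ blows up exactly as $h\uparrow B_u$, which is precisely where the optimal $\overline h$ tends to sit. The paper avoids this by first using the convexity result (Theorem~\ref{ContinuousSupremum}(c), via convex combinations of $\overline h$ and $\underline h$, which is why $\underline h < \overline h$ is assumed) to produce \emph{feasible} functions $h_l < h_u$ squeezed strictly between $\overline h-\epsilon$ and $\overline h$; these live in a fixed strip on which Lipschitz constants are available with a fixed positive margin $\delta$. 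It then uses Theorem~\ref{ODETheorem} to build admissible sequences $(y_k),(z_k)$ tracking $h_u$ and $h_l$, and absorbs the $O(\Delta)$ discretization error into the fixed margin $\delta$ rather than into a gap that shrinks with $\Delta$. Your forward-sweep monotonicity observation ($\lambda_h\Delta(D)<1$ makes $h\mapsto h+f^{\pm}(s,h)\Delta$ increasing, which also gives output admissibility) is correct and is indeed the right mechanism, but it does not rescue the anchoring problem; you need to replace $\overline h$ by a strictly interior feasible function before the Gr\"onwall estimate can be run.
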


\begin{proof}
Fix an arbitrary $\epsilon > 0$. The proof will consist of two parts. We will show there exist $\eta_1, \eta_2 > 0$ such that for every discretization $D([a, b], (s_i)_{i = 0}^n)$ with resolution at most $\eta_1$ ($\eta_2$), Algorithm \ref{BackwardForwardAlgorithm} produces an admissible sequence $(\hat{h}_i)_{i = 0}^n$ satisfying $\hat{h}_i \geq \overline{h}(s_i) - \epsilon$ ($\hat{h}_i \leq \overline{h}(s_i) + \epsilon$) for all $0 \leq i \leq n$. Clearly setting $\eta = \min(\eta_1, \eta_2)$ yields proof of the theorem.  

To prove the first part, consider feasible functions $h_l$ and $h_u$ such that $\overline{h} - \epsilon < h_l < h_u < \overline{h}$. Such functions exist by assumption $\overline{h} > \underline{h}$ and part (c) of Theorem \ref{ContinuousSupremum}. Define $\delta_1 = \inf_{s \in [a, b]} (\overline{h} - h_u)$, $\delta_2 = \inf_{s \in [a, b]} (h_u - h_l)$, and $\delta_3 = \inf_{s \in [a, b]} (h_l - (\overline{h} - \epsilon))$. Clearly $\delta_1,\delta_2, \delta_3 > 0$.

Set $\delta = \min \left\{ \frac{\delta_1}{3}, \frac{\delta_2}{3}, \frac{\delta_3}{3} \right\}$. By assumption, there exist $\lambda_s, \ \lambda_h > 0$ such that for all 
$
(s_1, h_1), (s_2, h_2) \in G := \{(s,h) \vert s \in [a, b], h_l(s) - \delta \leq h \leq h_u(s) + \delta \},
$
we have 
\[
\left| f^{\pm}(s_1, h_1) - f^{\pm}(s_2, h_2) \right| \leq \lambda_s |s_2 - s_1| + \lambda_h |h_2 - h_1|. 
\]

Choose 
\begin{equation}\label{DefEta}
\eta_1 = \delta e^{-\lambda_h B (b-a)} \min \left\{ \frac{1}{2B}, \frac{B \lambda_h}{\lambda_s + B \lambda_h} \right\}.
\end{equation}

We claim that for any discretization $D([a, b], (s_i)_{i = 0}^n)$ with resolution $\Delta(D) \leq \eta_1$, Algorithm \ref{BackwardForwardAlgorithm} produces an admissible sequence $(\hat{h}_{i})_{i = 0}^n$ satisfying $\hat{h}_i \geq \overline{h}(s_i) - \epsilon$ for all $ 0 \leq i \leq n$.

The proof of the claim will proceed in two stages. The first will show the sequence $(h_i^{(b)})_{i = 0}^n$ generated by the backward pass satisfies $h_{i}^{(b)} \geq h_u(s_i) - \delta$ for all $0 \leq i \leq n$. The second will show the sequence $(h_i^{(f)})_{i = 0}^n$ generated by the forward pass satisfies $h_i^{(f)} \geq h_l(s_i) - \delta \geq \overline{h}(s_i) - \epsilon$ for all $0 \leq i \leq n$.  
 
\begin{figure}[h!]
  \includegraphics[scale=0.5]{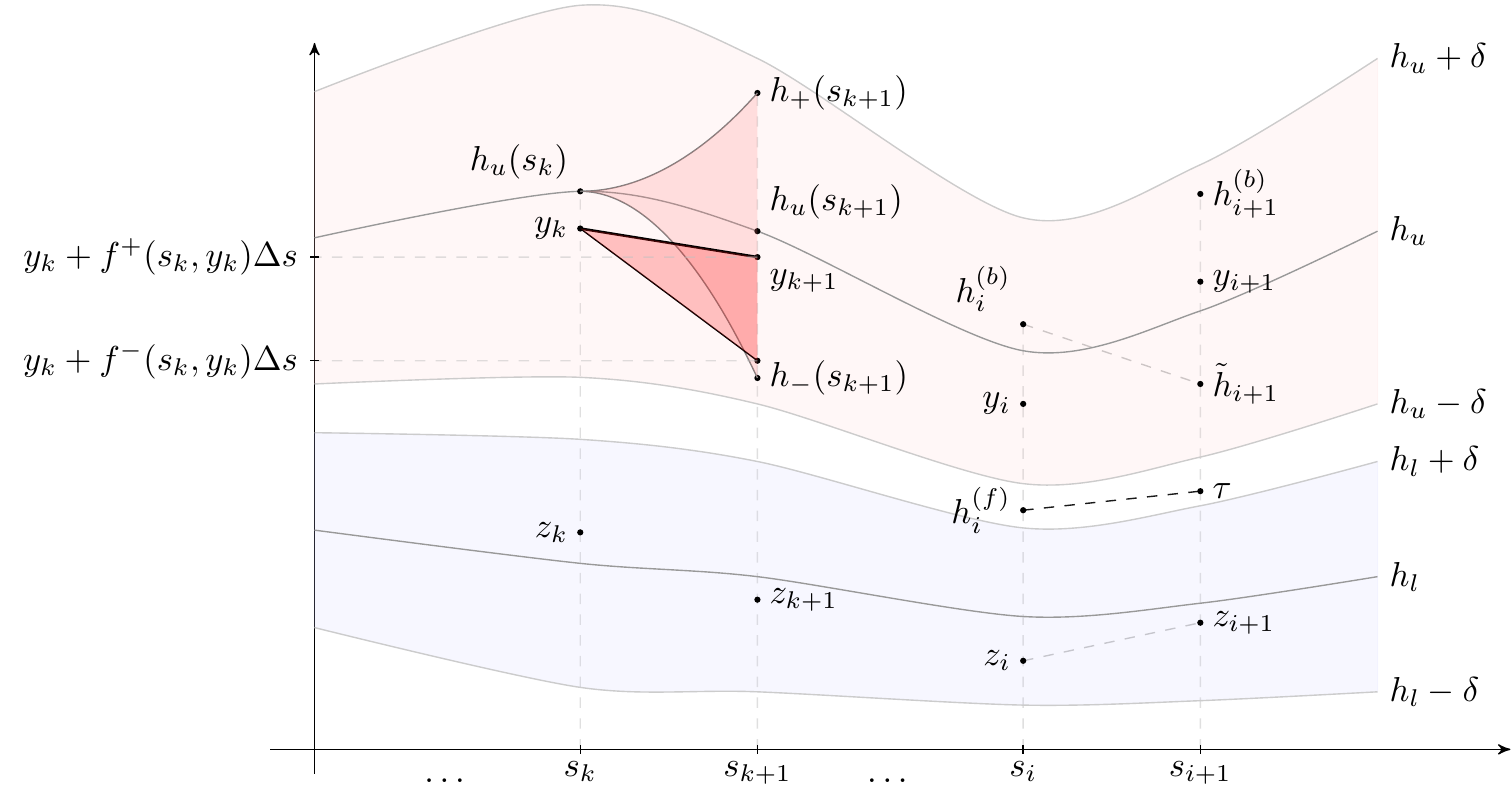}
  \caption{An illustration of the proof of Theorem \ref{ConsistencyProof} and Lemma \ref{BrokenLineLemma}.}
  \label{fig:LemmaProof}
\end{figure}
 
\begin{lemma} \label{BrokenLineLemma}
There exist admissible sequences $(y_k)_{k = 0}^n$ and $(z_k)_{k = 0}^n$ such that for every $0 \leq k \leq n$, we have $|y_k - h_u(s_k)| \leq \delta$ and $|z_k - h_l(s_k)| \leq \delta$ (see Figure \ref{fig:LemmaProof}). 
\end{lemma}

\begin{proof}
We only prove existence of $(y_k)_{k = 0}^n$ as that of $(z_k)_{k = 0}^n$ follows analogously. To this end, define $e : [a, b] \rightarrow \mathbb{R}$ given by 
\begin{equation}
e(s) = \delta e^{-\lambda_h B (b-a)} \left( e^{\lambda_h B (s-a)} - \frac{1}{2} \right)
\end{equation}
for all $s \in [a, b]$. Clearly $0 < e < \delta$. We will inductively construct an admissible sequence $(y_k)_{k = 0}^n$ which satisfies $|y_k - h_u(s_k)| \leq e(s_k)$, thus proving the lemma. 

Set $y_0 = h_u(s_0)$. Assume we have defined an admissible sequence $(y_j)_{j \leq k}$ satisfying $|y_j - h_u(s_j)| \leq e(s_j)$ for all $0 \leq j \leq k \leq n-1$. We now define $y_{k+1}$.   

By part (a) of Theorem \ref{ODETheorem}, the choice of $\eta_1$, along with boundedness and Lipschitz continuity of $f^{+}$ and $f^{-}$ on $G$, implies existence and uniqueness of solutions $h_{\pm}$ to initial value problems
\begin{equation} \label{LemmaIVP}
h_{\pm}'(s) = f^{\pm} (s, h_{\pm}(s)) \  \text{ subject to } \  h_{\pm}(s_k) = h_u(s_k)  
\end{equation}
on interval $[s_k, s_{k+1}]$. Furthermore, part (b) of Theorem \ref{ODETheorem} implies $h_{-}(s) \leq h_u(s) \leq h_{+}(s)$ for all $s \in [s_k, s_{k+1}]$. In particular, $h_{-}(s_{k+1}) \leq h_u(s_{k+1}) \leq h_{+}(s_{k+1})$.

Define $\Delta s = s_{k+1} - s_k$. Lipschitz continuity of $f^{\pm}$ and Equation (\ref{LemmaIVP}) imply (see Figure \ref{fig:LemmaProof}) 
\begin{equation} \label{exact_proximity}
\left| h_{\pm}(s_{k+1}) - ( h_{\pm}(s_k) + f^{\pm} (s_k, h_u(s_k)) \Delta s ) \right| \leq \frac{1}{2} \left(  \lambda_s + B \lambda_h \right) \Delta s^2.
\end{equation}
Similarly,
\begin{equation}
\begin{aligned} \label{approx_proximity}
 |  (y_k + & f^{\pm}(s_k, y_k) \Delta s)   - (h_{\pm}(s_k) + f^{\pm}(s_k, h_u(s_k))\Delta s) | \\ 
& \leq (1 + B \lambda_h \Delta s) \left|  y_k - h_u(s_k) \right|.
\end{aligned}
\end{equation}

Since an admissible value of $y_{k+1}$ can take on any value in the range $[y_k + f^{-}(s_k, y_k)\Delta s, y_k + f^{+}(s_k, y_k)\Delta s]$, Equations (\ref{exact_proximity}) and (\ref{approx_proximity}) imply the existence of admissible $y_{k+1}$ satisfying 
\begin{equation}
\begin{aligned}
|y_{k+1} - h_u(s_{k+1})| & \leq \frac{1}{2}(\lambda_s + B \lambda_h) \Delta s^2 + (1 + B \lambda_h \Delta s) |y_k - h_u(s_k)| \\
& \leq \frac{1}{2}(\lambda_s + B \lambda_h) \Delta s^2 + (1 + B \lambda_h \Delta s) e(s_k) \\
& \leq e(s_{k+1})
\end{aligned}
\end{equation}
where the second inequality follows from the inductive hypothesis, and the third from the definition of $e$ after a small amount of algebra. This completes proofs of the inductive step and the lemma.

\end{proof}

We now return to proofs of stages one and two. Assume sequences $\left( y_i \right)_{i = 0}^n$ and $\left( z_i \right)_{i = 0}^n$  have been constructed as in Lemma \ref{BrokenLineLemma}. Existence of $(y_i)_{i = 0}^n$ immediately implies the sequence $( h_i^{(b)})_{i = 0}^n$ is well-defined and satisfies $h_i^{(b)} \geq y_i \geq h_u(s_i) - \delta$ for all $0 \leq i \leq n$. This finishes the proof of stage one. 

For stage two, we prove by induction on $i$ that $h_i^{(f)}$ is well-defined and satisfies $z_i \leq h_i^{(f)} \leq h_i^{(b)}$ for all $0 \leq i \leq n$. The base case $i = 0$ follows from $h_0^{(f)} = h_0^{(b)} \geq y_0 > z_0$. For the inductive hypothesis, assume the statement holds for $i$. We now show it also holds for $i + 1$. The definition of the backward pass implies there exists $\tilde{h}_{i+1}$ such that 
\begin{equation} \label{DefBack}
h_{i+1}^{(b)} \geq \tilde{h}_{i+1} = h_i^{(b)} + f^{-}(s_i, h_i^{(b)})(s_{i+1} - s_i).
\end{equation}

We recall assumption $|f^{\pm}| \leq B$ along with feasibility of $h_u$ implies $h_u$ is $B$-Lipschitz. Thus,
\begin{equation}
\begin{aligned}
\tilde{h}_{i+1} & \geq h_i^{(b)} - B(s_{i+1} - s_i) \\
& \geq h_u(s_i) - \delta - B(s_{i+1} - s_i) \\
& \geq h_u(s_{i+1}) - \delta - 2 B (s_{i+1} - s_i).
\end{aligned}
\end{equation} 
Since $s_{i+1} - s_i \leq \frac{\delta}{2B}$, we have 
\begin{equation}\label{TildeBetween}
h_{i+1}^{(b)} \geq \tilde{h}_{i+1} \geq h_u(s_{i+1}) - 2 \delta \geq h_l(s_{i+1}) + \delta \geq z_{i+1}.
\end{equation}
Since $z_i \leq h_i^{(f)} \leq h_i^{(b)}$ (see Figure \ref{fig:LemmaProof}), there exists $\theta \in [0, 1]$ such that $h_i^{(f)} = \theta h_i^{(b)} + (1 - \theta) z_i$. Consider $\tau = \theta \tilde{h}_{i+1} + (1 - \theta) z_{i+1}$. Equation (\ref{TildeBetween}) implies
\begin{equation} \label{TauBetween} 
 z_{i+1} \leq \tau \leq h_{i+1}^{(b)}.
\end{equation}
Furthermore, 
\begin{equation}
\begin{aligned} \label{CandidateBigger}
\tau - h_i^{(f)} & = \theta (\tilde{h}_{i+1} - h_i^{(b)}) + (1 - \theta) (z_{i+1} - z_i) \\
& \geq \left( \theta f^{-}(s_i, h_i^{(b)}) + (1 - \theta) f^{-}(s_i, z_i) \right) (s_{i+1} - s_i) \\
& \geq f^{-}(s_i, \theta h_i^{(b)} + (1 - \theta) z_i) (s_{i+1} - s_i) \\
&  = f^{-}(s_i, h_i^{(f)}) (s_{i+1} - s_i),
\end{aligned}
\end{equation}
where the first inequality above follows from Equation (\ref{DefBack}) and admissibility of $(z_i)_{i = 0}^n$, and the second inequality from convexity of $f^{-}$ in its second argument. Similarly, we  obtain 
\begin{equation} \label{CandidateLess}
\tau - h_i^{(f)} \leq f^{+}(s_i, h_i^{(f)}) (s_{i+1} - s_i).
\end{equation} 
Equations (\ref{TauBetween}), (\ref{CandidateBigger}), and (\ref{CandidateLess}) imply $h_{i+1}^{(f)}$ is well-defined and satisfies $h_{i+1}^{(f)} \geq z_{i+1}$. This finishes the proof of the inductive step, the proof of stage two and of the first part of the theorem.

To prove the second part, consider $\xi > 0$ such that $\overline{h}_{\xi} \leq \overline{h} + \frac{\epsilon}{2}$. Such $\xi$ exists due to part (b) of Theorem \ref{ContinuousSupremum}. Uniform continuity of $f^{\pm}$ implies there exists 
\begin{equation} \label{Defdelta1}
\delta_1 \in (0, \epsilon / 2)
\end{equation}
such that for all $(s_1,h_1), (s_2, h_2) \in F$ we have 
\begin{equation} \label{fContinuity}
|s_1 - s_2| + |h_1 - h_2| \leq \delta_1 \Rightarrow | f^{\pm}(s_1,h_1) - f^{\pm}(s_2,h_2) | \leq \xi / 2.
\end{equation}
Uniform continuity of $B_u$ implies there exists 
\begin{equation} \label{BuContinuity}
\eta_2 < \frac{\delta_1}{1 + B}
\end{equation}
such that for all $s_1, s_2 \in [a, b]$ we have 
\begin{equation}\label{BuImplication}
|s_1 - s_2| \leq \eta_2 \Rightarrow |B_u(s_1) - B_u(s_2)| \leq \delta_1.
\end{equation}

Consider arbitrary discretization $D([a, b], (s_i)_{i = 0}^n)$ with $\Delta(D) \leq \eta_2$. Let $(\hat{h}_i)_{i = 0}^n$ be the sequence output by Algorithm \ref{BackwardForwardAlgorithm}. Define function $\tilde{h} : [a, b] \rightarrow \mathbb{R}$ via $\tilde{h}(s_i) = \hat{h}_i - \delta_1$ for all $0 \leq i \leq n$, and 
\begin{equation}
\tilde{h}(s) = \frac{s - s_i}{s_{i+1} - s_i}\tilde{h}(s_{i+1}) + \frac{s_{i+1} - s}{s_{i+1} - s_i}\tilde{h}(s_i)
\end{equation}
for all $s \in [s_i, s_{i+1}]$ and $0 \leq i \leq n-1$. By construction, $\tilde{h}$ is continuous. In fact, we show $\tilde{h} \in A_{\xi}$. 

First we will prove $\tilde{h} \leq B_u$. Consider any $0 \leq i \leq n-1$. Since $\tilde{h}$ is linear on $[s_i, s_{i+1}]$, $\tilde{h}(s) \leq \max (\tilde{h}(s_i), \tilde{h}(s_{i+1}))$ for all $s \in [s_i, s_{i+1}]$. Thus, it suffices to show $\tilde{h}(s_i), \tilde{h}(s_{i+1}) \leq \min_{s \in [s_i, s_{i+1}]} B_u(s)$. To this end, consider arbitrary $s \in [s_i, s_{i+1}]$. Since $|s - s_i| \leq \eta_2$, Equation (\ref{BuImplication}) implies $B_u(s) \geq B_u(s_i) - \delta_1$. Admissibility of $(\hat{h}_i)_{i = 0}^n$ implies $B_u(s_i) \geq \hat{h}_i$ and so
\begin{equation}
B_u(s) \geq B_u(s_i) - \delta_1 \geq \hat{h}_i - \delta_1 = \tilde{h}(s_i). 
\end{equation} 
Since $s$ was arbitrary, we obtain $\tilde{h}(s_i) \leq \min_{s \in [s_i, s_{i+1}]} B_u(s)$. The correspnding inequality for $\tilde{h}(s_{i+1})$ follows analogously, and so $\tilde{h} \leq B_u$ holds.

Next, we show $D^{+}\tilde{h}(s) \leq f^{+}(s, \tilde{h}(s))+ \xi$ for all $s \in [a, b)$. Again, consider arbitrary $0 \leq i \leq n-1$ and $s \in [s_i, s_{i+1})$. We have 
\begin{equation}
\begin{aligned}
D^{+}\tilde{h}(s) & = \frac{ \tilde{h}(s_{i+1}) -  \tilde{h}(s_i)}{s_{i+1} - s_i} 
= \frac{ \hat{h}_{i+1} -  \hat{h}_i}{s_{i+1} - s_i} \leq f^{+}(s_i, \hat{h}_i).
\end{aligned}
\end{equation} 
Also,
\begin{equation} \label{Proximity}
\begin{aligned}
|s - s_i| + |\tilde{h}(s) - \tilde{h}(s_i)|&  = 
|s - s_i| \left(1 + \left|  \frac{ h_{i+1} -  h_i}{s_{i+1} - s_i} \right| \right)  \\
& \leq |s - s_i| (1 + B) \leq \delta_1
\end{aligned}
\end{equation}
where the first equality follows from linearity of $\tilde{h}$ on $[s_i, s_{i+1}]$, and the second inequality from admissibility of $(\hat{h}_i)_{i = 0}^n$ and the fact $|f^{\pm}| \leq B$. Equations (\ref{fContinuity}) and (\ref{Proximity}) imply $f^{+}(s_i, \tilde{h}(s_i)) \leq f^{+}(s, \tilde{h}(s)) + \frac{\xi}{2}$. Similarly, $|\tilde{h}(s_i) - \hat{h}_i| \leq \delta_1$ implies $|f^{+}(s_i, \hat{h}_i) - f^{+}(s_i, \tilde{h}(s_i))| \leq \frac{\xi}{2}$. Combining the latter pair of inequalities, we derive $D^{+}\tilde{h}(s) \leq f^{+}(s, \tilde{h}(s))+ \xi$. Similarly, $D^{-}\tilde{h}(s) \geq f^{-}(s, \tilde{h}(s)) - \xi$, and so we obtain $\tilde{h} \in A_{\xi}$.

As a result, by definition of $\overline{h}_{\xi}$, we have $\tilde{h} \leq \overline{h}_{\xi}$. This implies for every $0 \leq i \leq n$
\begin{equation}
\begin{aligned}
h_i & = \tilde{h}(s_i) + \delta_1 
 \leq \overline{h}_{\xi}(s_i) + \delta_1 
 \leq  \overline{h}(s_i) + \frac{\epsilon}{2} + \delta_1 
 \leq \overline{h}(s_i) + \epsilon
\end{aligned}
\end{equation}
where the last inequality follows from Equation (\ref{Defdelta1}). This finishes the proof of the theorem.

\end{proof}

\section{Conclusion}

This paper presented two main results. First, it characterized the optimum of a large class of problems in time optimal path parametrization. Second, it proved the asymptotic optimality of a recently-proposed algorithm for solving this class of problems with linear (optimal) time complexity. This result extends its asymptotic optimality guarantee to all problems that are solved by relatively computationally more demanding convex-optimization-based methods. 
Let us note that, although we focused on the analysis of the algorithm presented in \cite{TOPPRA}, intermediate results in the proof of Theorem \ref{ConsistencyProof} could easily be combined to yield asymptotic optimality of the algorithm in \cite{ConvexWaiter}.

 

\bibliographystyle{IEEEtran}
\bibliography{references}

\end{document}